\documentclass{svproc}

\usepackage{tikz}
\usetikzlibrary{decorations.pathreplacing,calligraphy}
\usepackage{pdfpages}
\usepackage{url}
\usepackage{bm}
\usepackage{amsmath}
\usepackage{amssymb}
\usepackage{amsfonts}
\usepackage{todonotes}
\usepackage{hyperref}
\usepackage{cleveref}
\usepackage{caption,subcaption}

\def\zf{\bm z}
\def\R{\mathbb R}
\def\E{\mathbb E}

\def\Laplace{\mathrm{\Delta}}
\def\Fc{\mathcal{F}}

\def\ifat{\bm i}

\def\kfat{\bm k}

\def\lin{{l_{\mathrm{in}}}}
\def\lhidden{{l_{\mathrm{hid}}}}
\def\lout{{l_{\mathrm{out}}}}

\begin{document}
\mainmatter
\title{Solving Partial Differential Equations with Equivariant Extreme Learning Machines}
\titlerunning{Extreme Learning Machines for PDEs}  %

\author{Hans Harder$^1$ \and Jean Rabault$^2$ \and Ricardo Vinuesa$^3$ \and Mikael Mortensen$^4$ \and Sebastian Peitz$^1$}
\authorrunning{Harder et al.}
\institute{Department of Computer Science, Paderborn University, Paderborn, Germany \\
Contact: \email{hans.harder@uni-paderborn.de}
\and
Independent Researcher, Oslo, Norway
\and 
FLOW, Engineering Mechanics, KTH, Stockholm, Sweden %
\and 
Department of Mathematics, University of Oslo, Oslo, Norway
}

\maketitle              %

\begin{abstract}
We utilize extreme learning machines for the prediction of partial differential equations (PDEs). Our method splits the computational domain into multiple windows that are predicted individually using a single model. Despite requiring only few data points (in some cases, our method can learn from a single full-state snapshot), it still achieves high accuracy and can predict the flow of PDEs over long time horizons. Moreover, we show how additional symmetries can be exploited to increase sample efficiency and to enforce equivariance.

\keywords{extreme learning machines, partial differential equations, data-driven prediction, high-dimensional systems}
\end{abstract}

\setcounter{footnote}{0}

\section{Introduction}
The efficient modeling, prediction, and control of complex systems governed by partial differential equations (PDEs) via data-driven and machine-learning-based techniques has received massive attention in recent years. PDEs are an extremely important class of equations that govern the physics of many applications such as heat transfer, continuum mechanics, fluid dynamics, or quantum mechanics. 

In the past decades, we have seen a plethora of various surrogate-modeling techniques for PDEs. They are useful for a few reasons: 1) in the absence of exact equations or in case of noise, they can learn equations purely from data; 2) surrogate models can often predict larger time steps than conventional solvers, improving computational efficiency; 3) they can automatically identify underlying low-dimensional structures and thereby mitigate the curse of dimensionality \cite{HJE18}; and 4), when implemented in modern deep-learning frameworks, they are differentiable by definition (this is useful for, e.g., control-based applications).
Popular examples are the proper-orthogonal decomposition~\cite{Sir87,KV99,POD19}---where we project the known equations onto a low-dimensional subspace spanned by basis functions learned from data---as well as many non-intrusive methods such as the dynamic-mode decomposition~\cite{Sch10}, i.e., the Koopman operator framework~\cite{RMB+09,NM20,Mau21,PHN+23}.
Quite unsurprisingly, deep learning plays a central role as well, see \cite{WRL19,PTMS22,ZLG22,WP24}. 
More recently, the notion of physics-informed machine learning~\cite{KKLP+21,eivazi2022physics} has become increasingly popular. There---instead of treating the system as a black box---one encodes system knowledge into the training process, for instance in the form of the governing equations \cite{GZ20,LKA+20,LJP+21,RRL+22}. This significantly reduces the amount of required training data, while at the same time improving robustness.
In addition, one may enforce symmetries---also referred to as geometric deep learning~\cite{BBCV21} or group convolutional neural networks~\cite{knigge2022exploiting} in the machine-learning community---to increase efficiency, see, for instance, \cite{TW16,EAMD18} for translational and rotational symmetries in image classification. Using group convolutional neural networks to leverage either invariance or equivariance properties of the problem considered (depending on the underlying problem and the task to accomplish) can result in significant improvement in model performance and robustness, as well as improved data efficiency during the training phase \cite{lafarge2021roto}. In reinforcement-learning-based control of PDEs, the exploitation of translational symmetries gives rise to efficient multi-agent concepts \cite{ZG21,GRS+23,VRV+23,PSC+24}. For complex control tasks where the underlying problem presents strong symmetries which the controller should be aware of, but where these symmetries should be broken to reach a better state, one may obtain significant gains by combining concepts of multi-agent control, group convolutional neural network, and positional encoding \cite{jeon2024advanced}. This results in controllers that are symmetries-aware, but that are also able to choose to actively break existing symmetries if necessary.

When it comes to symmetry exploitation in data-driven modeling of PDEs, mostly translational equivariance has been exploited until now \cite{pathakModelFreePredictionLarge2018,rafayelyanLargeScaleOpticalReservoir2020,pandeyReservoirComputingModel2020,vlachasBackpropagationAlgorithmsReservoir2020}. These works are based on reservoir computing, i.e., recurrent architectures, and it is well-known that these can be challenging to train and require a substantial initialization phase to adjust the latent variables before they can make useful predictions.
The goal of this paper is 1) to increase the sample efficiency while avoiding recurrence, and 2) to include symmetries beyond translation. To this end, we are going to make use of extreme learning machines (ELMs)~\cite{wangReviewExtremeLearning2022,dingExtremeLearningMachine2014,huang2006extreme,DS20} to learn the flow map of chaotic PDEs. ELMs are two-layer networks that can be trained using simple linear regression.
When exploiting translational equivariance similar to \cite{pathakModelFreePredictionLarge2018}, ELMs can be trained from few time steps and make accurate predictions over long time horizons (\Cref{sec:EELMs}). In some instances, a \emph{single} time step suffices to learn a precise approximation of the PDE. Including additional symmetries (i.e., rotations) further increases the efficiency (\Cref{sec:AdditionalSymmetries}).
All simulations have been implemented using the spectral PDE solver \texttt{Shenfun} \cite{mortensenShenfunHighPerformance2018}.
Our \texttt{python} code for the experiments is freely available under \url{https://github.com/graps1/ELM-PDEs}. %

\section{Preliminaries}

Consider a continuous-time dynamical system on some state space $\Fc$,
$$
    \Phi : [0,\infty) \times \Fc \rightarrow \Fc, (t,v) \mapsto \Phi_t (v).
$$
In our setting, $\Fc$ is a function space (e.g., Sobolev space) containing scalar fields of the form $v : \Omega \rightarrow \R$, where $\Omega \subseteq \R^n$ is a set of $n$-dimensional spatial coordinates.
The dynamical system is induced by a PDE of the form
\begin{align}
    \label{eq:pde}
    \partial_t v(t, x) &= f(v(t, x), D v(t, x), D^2 v(t,x), \dots, D^n v(t,x)),
\end{align}
where $v(t) := v(t,\cdot) \in \Fc$ and $D^k v(t,x)$ is a vector containing all spatial derivatives of order $k$ of $v(t)$ evaluated at $x$.
Throughout this text, we have $\Omega = [0,L]^n$, $L > 0$, and assume periodic boundary conditions, that is, $v(t,0) = v(t, x)$ for $x \in \{0,L\}^n$.
The associated flow satisfies
$$
    \Phi_{\Delta t}(v(t)) = v(t+\Delta t),
$$
where $\Delta t \geq 0$ is often fixed and hence omitted. Note that we use $v$ to denote both time-dependent states ($v(t) = v(t,\cdot) \in \Fc$) as well as individual states ($v \in \Fc$). The meaning will be clear from the context.

We study the Cahn-Hillard (CH) equation in two spatial dimensions and three variants of the Kuramoto-Sivashinsky (KS) equation in one or two spatial dimensions; all equations are considered for periodic boundary conditions. 
The Cahn-Hillard equation is given by
\begin{equation}
    \partial_t v = - \Laplace v - \gamma \Laplace^2 v + \Laplace^2 (v^3),
    \label{eq:CH_Nd}
\end{equation}
where $\Laplace$ and $\Laplace^2$ are the Laplace and biharmonic operators, and $\gamma$ is a parameter.
The first variant of the KS equation is defined in two spatial dimensions by
\begin{equation}
    \partial_t v = - \Laplace v - \Laplace^2 v - \textstyle\frac 1 2 | \nabla v |^2,
    \label{eq:KS_Nd}
\end{equation}
where $\nabla$ is the gradient. Another variant is given for one spatial dimension by
\begin{equation}
    \partial_t v = 
    -\partial_{xx}^2 v - \partial_{xxxx}^4 v 
    - (\partial_x v) v,
    \label{eq:KS_1d}
\end{equation}
which is obtained by taking \eqref{eq:KS_Nd} for the case of one spatial dimension, differentiating with respect to $x$ and then substituting $\partial_x v$ by $v$. This form is popular in fluid dynamics, and, due to its chaotic behavior, often used as a test bed for learning algorithms \cite{pathakModelFreePredictionLarge2018,rafayelyanLargeScaleOpticalReservoir2020,vlachasBackpropagationAlgorithmsReservoir2020}. Finally, we consider a variant of \eqref{eq:KS_1d} introduced in \cite{pathakModelFreePredictionLarge2018} that has an explicit dependence on the spatial coordinate:
\begin{equation}
    \partial_t v = 
    -\partial_{xx}^2 v - \partial_{xxxx}^4 v 
    - (\partial_x v) v + \mu \cos\big(2 \pi x / \lambda\big),
    \label{eq:KS_1d_spatial}
\end{equation}
where $\mu$ and $\lambda$ are also parameters.

There are multiple ways to handle the spatial dimensions of a PDE, one common approach being the discretization of space into a grid where every node stores an estimate of the state at that coordinate. 
For a one-dimensional system like \eqref{eq:KS_1d}, the state is approximated by a vector, in the case of two dimensions by a matrix, etc. 
For example, in case of one spatial dimension, one can discretize $\Omega = [0,L]$ into a grid of $m$ evenly distributed nodes by introducing
$$
    \overline v \in \R^m, \quad \overline v_i = v(i \Delta x), \quad\text{for } i = 0,\dots,m-1 \text{ and } \Delta x = L/m.
$$
One then models the flow of $v(t) \in \Fc$ in the discretized space, $\overline v(t) \mapsto \overline v(t+ \Delta t)$. %

To approximate spatial derivatives, one can then use finite-difference coefficients (or stencils for more than one dimension) \cite{levequeFiniteDifferenceMethods2007,strikwerdaFiniteDifferenceSchemes2004} that linearly aggregate the values of neighboring nodes. 
In our setting (more formally, for hyperbolic PDEs), %
this means that the value of the temporal derivative at a node's position depends only on a local neighborhood around it and its spatial coordinate. 
Again for $\Omega = [0,L]$, this means that for sufficiently small $\Delta t$, one can approximate 
$$
    \overline v_i(t + \Delta t) \quad\text{from}\quad \overline v_{i-\ell}(t), \dots , \overline v_{i+\ell}(t),
$$
that is, from a few ``neighboring'' values of its previous state, where $\ell$ is usually small. (In the case of spatial inhomogeneities, one also has has a dependence on $i$.)
When it comes to \emph{learning} these PDEs, this is commonly exploited using CNN-based architectures \cite{RRL+22,WP24,long2018pde}. Our method is similar, but we explicitly train one small model that is applied everywhere.

\subsection*{Extreme learning machines}

We use extreme learning machines (ELMs) \cite{dingExtremeLearningMachine2014,huang2006extreme} to predict the flow. Simply put, ELMs are two-layered neural networks that are trained in a certain way: The weights of the first layer are chosen randomly (in our case, always uniformly from $-\sqrt k, \dots, \sqrt k$, where $k = 1/\lin$ and $\lin$ is the number of inputs), and only the second bias-free layer is trained. In contrast to the usual gradient-based methods, this is done by directly solving a least-squares problem. Given a pair of random variables $(\zf,\zf^+)$ modeling the input-output relationship with values in $\R^{\lin}$ and $\R^{\lout}$ respectively, ELMs define a mapping $E : \R^{\lin} \rightarrow \R^{\lout}$ by solving
\begin{align}
\label{eq:elm-formulation}
    \min_{\Theta \in \R^{\lout \times \lhidden}} \E[\lVert E(\zf)  - \zf^+ \rVert^2_2], \quad \quad E(\zf) = \Theta \phi(\zf),
\end{align}
where $\lhidden$ is the hidden dimension and $\phi : \R^\lin \rightarrow \R^{\lhidden}$ composes the first layer with the activation function. Therefore, only the linear output layer, parameterized by $\Theta$, is learned. Setting the derivative of the loss with respect to $\Theta$ to zero, we may express the optimal solution as
\begin{equation}
    \Theta^* = \E[\zf^+ \phi(\zf)^T] \E[\phi(\zf) \phi(\zf)^T]^{-1} = CD^{-1}.
    \label{eq:optimal_Theta}
\end{equation}
The $\lout \times \lhidden$ and $\lhidden \times \lhidden$-matrices $C$ and $D$ are estimated---not computed exactly---by averaging finitely many known input-output pairs. %
Note that reservoir computers (e.g., \cite{pandeyReservoirComputingModel2020}) are trained very similarly; they are the recurrent pendant to the ELM's feed-forward  architecture.
The mapping $\phi$ can be seen as a random $\lhidden$-dimensional dictionary of observables. That is, it takes the data and maps it into a potentially higher dimension, making it easier to learn a linear mapping from embedded input to output.
This training procedure is shared by, or at least similar to, many other approaches such as sparse-identification of nonlinear dynamics (SINDy) \cite{bruntonDiscoveringGoverningEquations2016}, radial-basis function networks or extended dynamic mode decomposition \cite{williams2015data}. Due to the random initialization, an advantage of ELMs is that the hyperparameter tuning is omitted up to choosing an activation function and an initialization strategy for the first layer. Its advantage over gradient-based methods is that the optimal solution is highly precise and recovered almost instantaneously.

\begin{figure}[ht]
    \begin{center}
    \scalebox{0.8}{
        \pgfmathsetmacro{\ss}{0.4}
        \pgfmathsetmacro{\offset}{3}
        \pgfmathsetmacro{\rwidth}{3}
        \pgfmathsetmacro{\roffset}{3}
        \begin{tikzpicture}
            \draw [thin, gray] (0,0) -- (0,7*\ss);
            \draw [thin, gray] (\ss,0) -- (\ss,7*\ss);
            \foreach \i in {0,...,7} {
                \draw [thin, gray] (0,\i*\ss) -- (\ss,\i*\ss);
            }

            \draw [fill=cyan] (0, 2*\ss) rectangle ++(\ss,3*\ss) ;
            \draw [very thick, black] (0,\ss) -- (0,6*\ss);
            \draw [very thick, black] (\ss,\ss) -- (\ss,6*\ss);
            \foreach \i in {1,...,6} {
                \draw [very thick, black] (0,\i*\ss) -- (\ss,\i*\ss);
            }
            
            \draw [thin, gray] (\offset,0) -- (\offset,7*\ss);
            \draw [thin, gray] (\offset+\ss,0) -- (\offset+\ss,7*\ss);
            \foreach \i in {0,...,7} {
                \draw [thin, gray] (\offset,\i*\ss) -- (\offset+\ss,\i*\ss);
            }

            \draw [fill=cyan] (\offset, 2*\ss) rectangle ++(\ss,3*\ss) ;
            \draw [very thick, black] (\offset,2*\ss) -- (\offset,5*\ss);
            \draw [very thick, black] (\offset+\ss,2*\ss) -- (\offset+\ss,5*\ss);
            \foreach \i in {2,...,5} {
                \draw [very thick, black] (\offset,\i*\ss) -- (\offset+\ss,\i*\ss);
            }

            \draw[thick] (\roffset*\ss, 1.5*\ss) rectangle 
                (\roffset*\ss+\rwidth*\ss,5.5*\ss) node[midway] {ELM};

            \draw [thick, black] (\ss,\ss) -- (\roffset*\ss, 1.5*\ss);
            \draw [thick, black] (\ss,6*\ss) -- (\roffset*\ss, 5.5*\ss);
            \draw [thick, black] (\rwidth*\ss+\roffset*\ss,1.5*\ss) -- (\offset,2*\ss);
            \draw [thick, black] (\rwidth*\ss+\roffset*\ss,5.5*\ss) -- (\offset,5*\ss);

            \draw node[anchor=east] at (-1*\ss,6*\ss+0.5*\ss) {$i - 2$};
            \draw node[anchor=east] at (-1*\ss,5*\ss+0.5*\ss) {$i - 1$};
            \draw node[anchor=east] at (-1*\ss,4*\ss+0.5*\ss) {$i$};
            \draw node[anchor=east] at (-1*\ss,3*\ss+0.5*\ss) {$i + 1$};
            \draw node[anchor=east] at (-1*\ss,2*\ss+0.5*\ss) {$i + 2$};
            \draw node[anchor=east] at (-1*\ss,1*\ss+0.5*\ss) {$i + 3$};
            \draw node[anchor=east] at (-1*\ss,0*\ss+0.5*\ss) {$i + 4$};
            
            \draw node[anchor=west] at (\offset+1.5*\ss,4*\ss+0.5*\ss) {$i$};
            \draw node[anchor=west] at (\offset+1.5*\ss,3*\ss+0.5*\ss) {$i + 1$};
            \draw node[anchor=west] at (\offset+1.5*\ss,2*\ss+0.5*\ss) {$i + 2$};

            \draw (\roffset*\ss+\rwidth*0.5*\ss, +8*\ss) node {$p(i \Delta x)$};
            \draw[->] (\roffset*\ss+\rwidth*0.5*\ss, +8*\ss-0.3) -- (\roffset*\ss+\rwidth*0.5*\ss, 5.5*\ss);

            \draw node at (0.5*\ss,-\ss) { $\overline v(t) \in \R^m$ };
            \draw node at (\offset+0.5*\ss,-\ss) { $\overline v(t+\Delta t) \in \R^m$};

        \end{tikzpicture}}
    \end{center}
    \vspace{-2em}
    \caption{Sliding-window approach with an extent of $\ell = 1$ and a stride of $s = 3$.}
    \label{fig:sliding_window_1d}
\end{figure}

\section{Extreme learning machines for PDEs} 
\label{sec:EELMs}

When it comes to predicting PDEs, a simple and frequently used test-bed is the one-dimensional KS equation, which comes in a spatially inhomogeneous \eqref{eq:KS_1d_spatial} and homogeneous variant \eqref{eq:KS_1d}. With regards to this setting, most approaches are based on recurrent neural networks or more specifically reservoir computing~\cite{pathakModelFreePredictionLarge2018,rafayelyanLargeScaleOpticalReservoir2020,vlachasBackpropagationAlgorithmsReservoir2020}.
Falling into the latter category, the method in \cite{pathakModelFreePredictionLarge2018} trains one (in case of spatial homogeneity) or multiple (in case of spatial inhomogeneity) reservoirs that handle different parts of the space.
We employ a similar approach but always train a single ELM that is applied everywhere. Essentially, our implementation shifts a window over the coordinate space and predicts the PDE's state at the succeeding and smaller window, with an optional ``positional encoding'' that tells the ELM its current location.
Besides demonstrating our approach on \cref{eq:KS_1d,eq:KS_1d_spatial}, we show that complex systems such as \eqref{eq:KS_Nd} and \eqref{eq:CH_Nd} in two spatial dimensions are also learnable; moreover, for the case of the two-dimensional KS equation, we show how one can exploit multiple inherent symmetries. 

\subsection{Model architecture and training}\label{sec:training}

We describe our method for the one-dimensional case only, since its extension to two or more dimensions is straight forward.
In this case, the ELM is a function of the form $E : \R^{2\ell + s} \rightarrow \R^s$, where $\ell$ is the \emph{extent} and $s$ the \emph{stride}. 
Intuitively, $s$ defines how many values are predicted simultaneously, while $\ell$ defines the amount of ``extra information'' that is taken into account left and right; see \Cref{fig:sliding_window_1d} for an illustration.
In two dimensions, the window is a matrix, and the value at a certain location is approximated by a function accepting a two-dimensional window as its input.
Taking the (periodic) boundary conditions into account, one can then view the ELM %
as an approximation to the flow $\overline v(t) \mapsto \overline v(t+ \Delta t)$ by ``gluing'' copies of the same model together.
The training is done following \eqref{eq:elm-formulation} and \eqref{eq:optimal_Theta}, where $\bm z$ and $\bm z^+$ are randomly-sampled input-output windows:
Taking states from a single trajectory by sampling a random time $\bm t$ and a random index $\bm i$, we define $\bm z$ and $\bm z^+$ by:
\begin{align*}
    \bm z &= (\quad \overline v_{\bm i-\ell} (\bm t), \dots, \overline v_{\bm i+s-1+\ell}(\bm t) \quad), \quad\quad &&\text{(with values in $\R^{2\ell +s}$)} \\
    \bm z^+ &= ( \quad \overline v_{\bm i}(\bm t + \Delta t), \dots, \overline v_{\bm i+s-1}(\bm t + \Delta t) \quad ).
    \quad\quad &&\text{(with values in $\R^s$)}
\end{align*}
For an explicit dependence on the spatial coordinate as in \eqref{eq:KS_1d_spatial}, one can include information about the model's current location in the mapping, which we do by applying a positional encoding and concatenating it with the model's input.
In that case, we found it best to work with a Transformer-inspired binary-like encoding of the position:
\begin{align*}
    p(x) = (\cos(\pi x/L),\cos(2\pi x/L),\cos(4\pi x/L),\dots,\cos(2^k \pi x/L)),
\end{align*}
with the idea that the model could ``branch'' on the components of $p(x)$ whenever it needs to know its current location. The positional encoding is then concatenated with the input window (in this case, $E:  \R^{2 \ell + s + k + 1} \rightarrow \R^s$).

\begin{table}[t]
    \centering
    \def\arraystretch{1.1}
    \setlength{\tabcolsep}{0.22em}
    \begin{tabular}{l|c|c|c|c}
        Parameter & hom. KS \eqref{eq:KS_1d} & inhom. KS \eqref{eq:KS_1d_spatial} & 2d KS \eqref{eq:KS_Nd} & 2d CH \eqref{eq:CH_Nd} \\
        \hline 
        domain size $L$ & 200 & 200 & $60\pi$ & 100 \\
        grid size $m$ & 512 & 512 & 256 & 512 \\
        initial state $\overline v(0)$ & on attractor & on attractor & on attractor & uniformly $-1\dots 1$ \\
        time discretization $\Delta t$ & 0.05 & 0.05 & 0.05 & 0.05 \\
        additional parameters & -- & $\lambda = 50, \mu = 0.05$ & -- & $\gamma = 0.5$ \\
        \hline
        extent $\ell$ & 7 & 7 & 2 & 4 \\
        stride $s$ & 4 & 4 & 4 & 4 \\
        hidden units $\lhidden$ & 150 & 150 & 600 & 500 \\
        noise std. $\sigma$ & $10^{-4}$ & $10^{-4}$ & $10^{-4}$ & $10^{-3}$ \\
        training samples & 20 & 200 & 1 & 1 \\
        positional encoding $k$ & 0 & 3 & 0 & 0 \\
    \end{tabular}
    \vspace{1em}
    \caption{Experimental setup of both simulation and ELM for different PDEs.}
    \label{tab:experimental-parameters}
    \vspace{-2em}
\end{table}

\subsection{Experiments}

In this section, we conduct multiple experiments on different PDEs. Every experiment illuminates a different aspect of the proposed method: 1), for the homogeneous one-dimensional KS equation, we train multiple models and demonstrate their robustness for different initializations. 2), for its inhomogeneous variant, we show that positional encodings allow the model to recover typical patterns to a higher degree than in their absence. 3), we show that a \emph{single} input-output pair suffices for training an ELM on the two-dimensional CH and KS equations. And 4), for the two-dimensional KS equation, we show that multiple system-inherent symmetries can be exploited, giving rise to \emph{equivariant} ELMs.

For the general setup, we always use the softplus nonlinearity (which worked best for all tested nonlinearities), apply min-max normalization, and train the ELM by adding zero-mean Gaussian noise on the input (this is essential for the model's stability). The remaining parameters are shown in \Cref{tab:experimental-parameters}---these values were chosen rather ad-hoc, but there is a general trade-off between speed and precision: If speed is a priority, one should use a small hidden dimension, a small extent, a large stride (so that more values are predicted in parallel). For precision, one should do the opposite. %

Training is done as described in the previous section; we iteratively pick a random snapshot and a window of input-dimensional size, compute its embedding and the target window, and improve the estimates of $C$ and $D$ from \eqref{eq:optimal_Theta} using running averages. 
To quantify prediction quality, we use the squared error between simulated ($\overline v$) and predicted ($\widehat v$) state that is then normalized by the performance of a ``baseline'' model that predicts $\overline v(t)$'s average value (the computed quantity is then proportional to $\overline v(t)$'s variance when indices are sampled randomly). We obtain the relative squared error (RSE) as a function of time:
$$
    \mathrm{RSE}(t) = 100\% \cdot \frac{\lVert \overline v(t)-\widehat v (t) \rVert^2_2}{\lVert \overline v(t)- \frac 1 m \sum_{i=0}^{m-1} \overline v_i(t)] \rVert^2_2}.
$$

\begin{figure}[t]
    \begin{center}
        \includegraphics[width=0.7\textwidth]{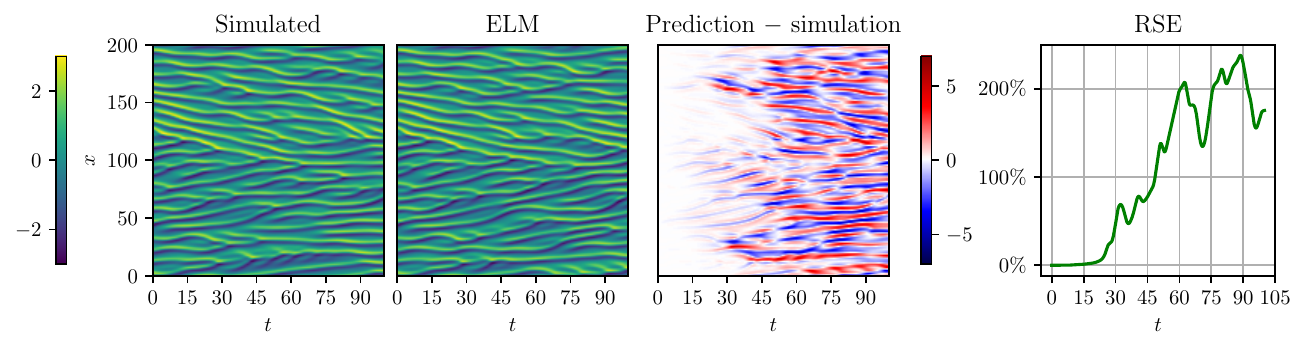}
        \includegraphics[width=0.29\textwidth]{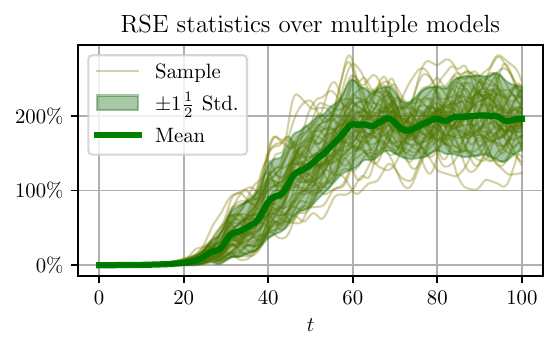}
    \end{center}
    \vspace{-1em}
    \caption{Exemplary simulation of the homogeneous KS system compared to an ELM's prediction (left) and statistics over multiple ($\sim 50$) models (right).}
    \label{fig:KS_1d_homogeneous}
    \vspace{-1em}
\end{figure}

\paragraph{Model statistics (\cref{eq:KS_1d}).}

For the homogeneous KS equation, we show the simulated trajectory together with an ELM-based prediction and their deviation from each other on the left side of \Cref{fig:KS_1d_homogeneous}. To showcase the robustness of this approach, we have trained multiple models for different initial weights and plotted their deviation from the target trajectory on the right side of \Cref{fig:KS_1d_homogeneous}. In all cases, the ELM accurately predicts the first few time steps. The error increases with time, which is expected for chaotic systems, and converges to an average of $200\%$. %
The obtained results are similar to that of reservoir-based methods, compare for example the right side of \Cref{fig:KS_1d_homogeneous} with Figure 5 in \cite{pandeyReservoirComputingModel2020}.

\begin{figure}[t]
    \begin{center}
        \includegraphics[width=0.8\textwidth]{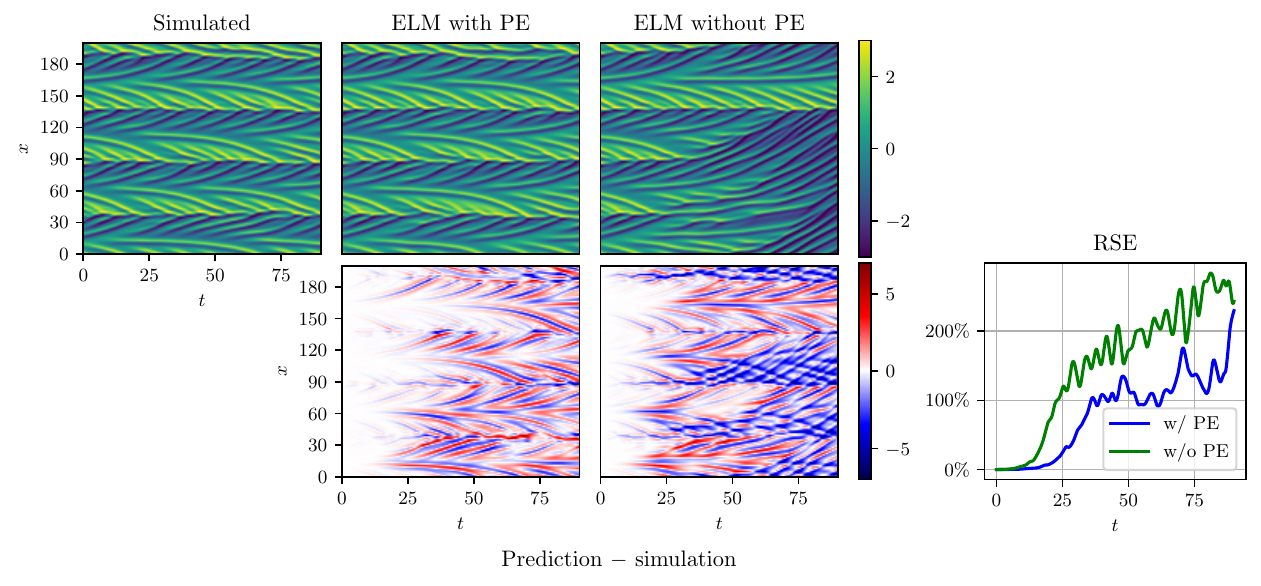}
    \end{center}
    \vspace{-2em}
    \caption{Simulation of the inhomogeneous KS system compared to the ELM's prediction with and without positional encoding (PE).} %
    \label{fig:KS_1d_inhomogeneous}
\end{figure}

\begin{figure}[t]
    \begin{center}
        \includegraphics[width=.75\textwidth]{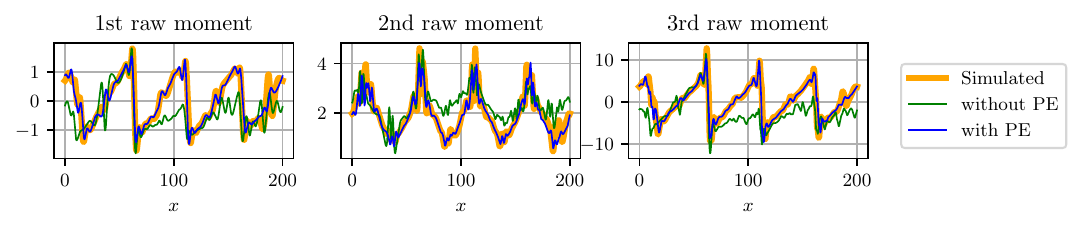}
    \end{center}
    \vspace{-1em}
    \caption{Statistics from the inhomogeneous KS system. The $k$-th raw moment of a trajectory $\overline v$ (simulated or predicted by one of the ELMs) is $\E[\overline v(\bm t)^k]$, where $\bm t$ is sampled uniformly from the time interval. Similar flow statistics were used by \cite{srinivasan2019predictions}.}
    \label{fig:KS_1d_inhomogeneous_statistics}
\end{figure}

\paragraph{Using positional encodings (\cref{eq:KS_1d_spatial}).}
For the inhomogeneous KS equation, \Cref{fig:KS_1d_inhomogeneous} shows that positional encodings significantly improve the prediction quality and allow the ELM to recover typical patterns. In contrast, if there are no positional encodings, then the ``peaks'' eventually collapse. This is also supported by \Cref{fig:KS_1d_inhomogeneous_statistics}, in which we estimate the first three raw moments of the predicted trajectories as functions of $x$ by averaging over time. (Intuitively, these moments are related to the mean, variance and skewness of the PDE state at some location $x$.) Comparing the resulting graphs, the statistics generated by the ELM with positional encodings closely resemble those of the simulated trajectory.

\begin{figure}[t]
    \begin{center}
        \includegraphics[width=0.74\textwidth]{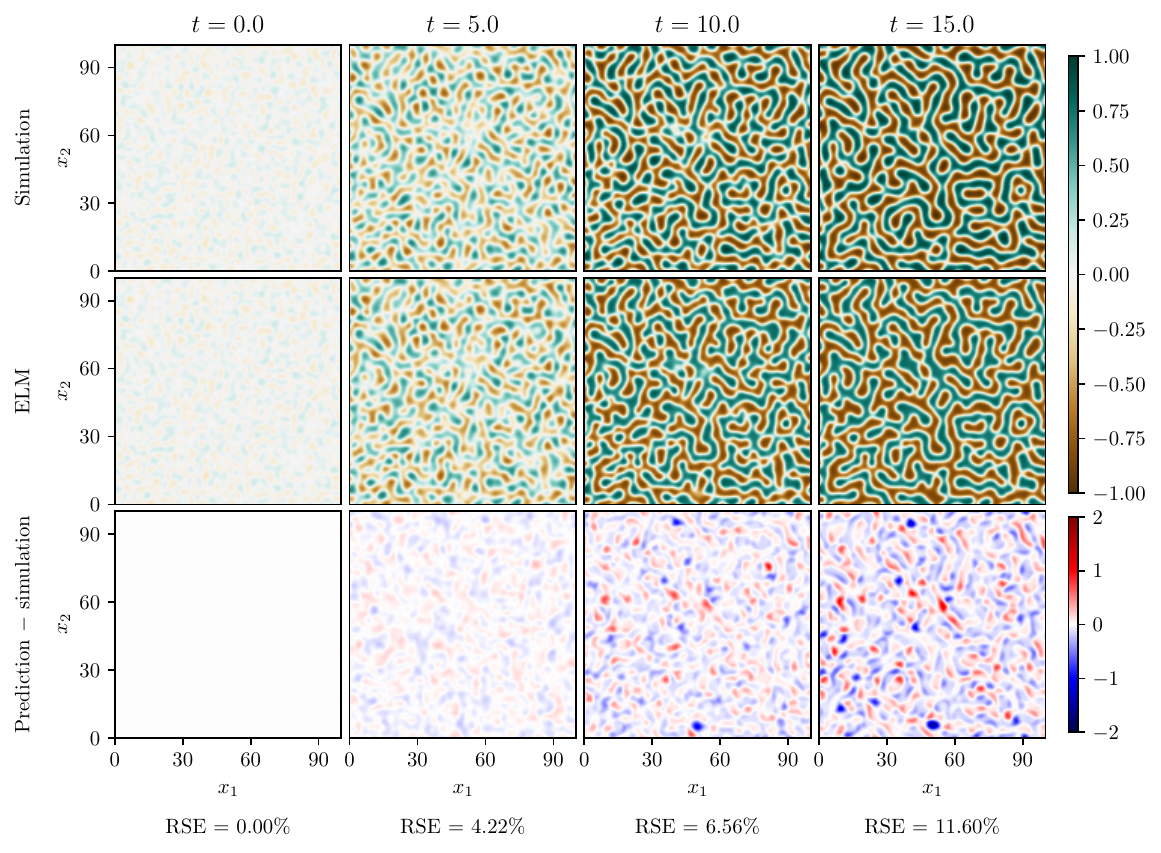}
        \includegraphics[width=0.25\textwidth]{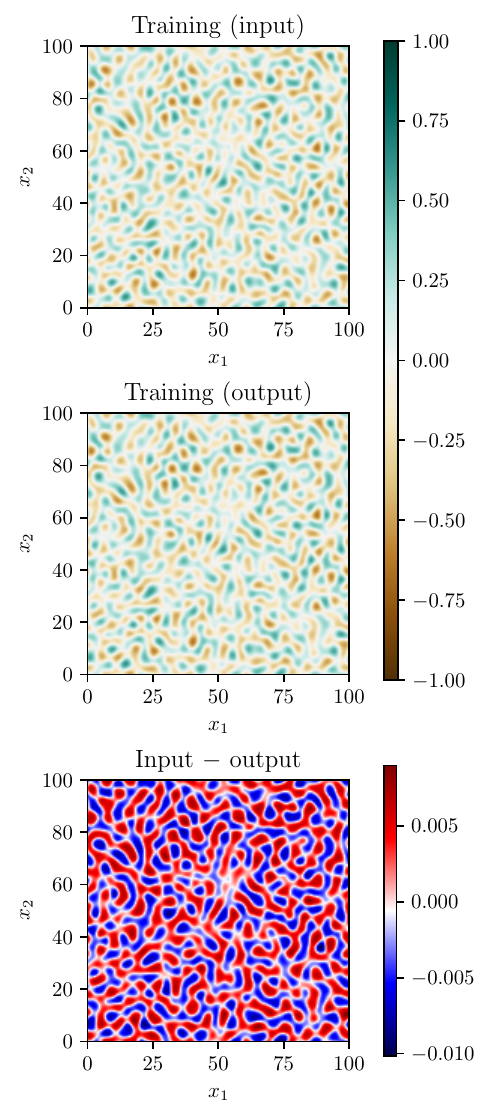}
    \end{center}
    \vspace{-2em}
    \caption{Simulation of the two-dimensional Cahn-Hilliard equation compared to the ELM's prediction (left), together with the training data (right).}
    \label{fig:ch2d}
\end{figure}

\paragraph{Learning from a single sample (\cref{eq:CH_Nd,eq:KS_Nd}).}
The Cahn-Hilliard equation describes phase separation processes in binary mixtures, where two components are separating over time due to differences in concentration. As the equation evolves, the initially small and numerous domains merge or disappear, forming areas where one component is more concentrated than the other, see the top row in \Cref{fig:ch2d}. %
It is possible to learn this system using an ELM, and, surprisingly, a single input-output pair suffices to train it. That is, even though the ELM was trained on a single frame (right column in \Cref{fig:ch2d}), it is still able to accurately predict the PDE's evolution for qualitatively different states (center row of \Cref{fig:ch2d}).
For the KS equation, note that variant \eqref{eq:KS_Nd} is different from \eqref{eq:KS_1d} and \eqref{eq:KS_1d_spatial}. In fact, in contrast to the latter two equations, it is unstable in the sense that the state's mean diverges over time (e.g., \cite{kalogirou2015depth}). To obtain consistent behavior, one can solve the equation by zeroing out the state's mean, apply one or multiple integration steps, and add the previously substracted mean back in again. For the ELM-based surrogate, we make this part of the pre- and post-processing procedure, so that it always receives zero-mean states as its input. The ELM's prediction together with the simulated trajectory and their deviation is shown in \Cref{fig:ks2d}. 
Again, a single input-output pair suffices for training.

\begin{figure}[t]
    \begin{center}
        \includegraphics[width=0.74\textwidth]{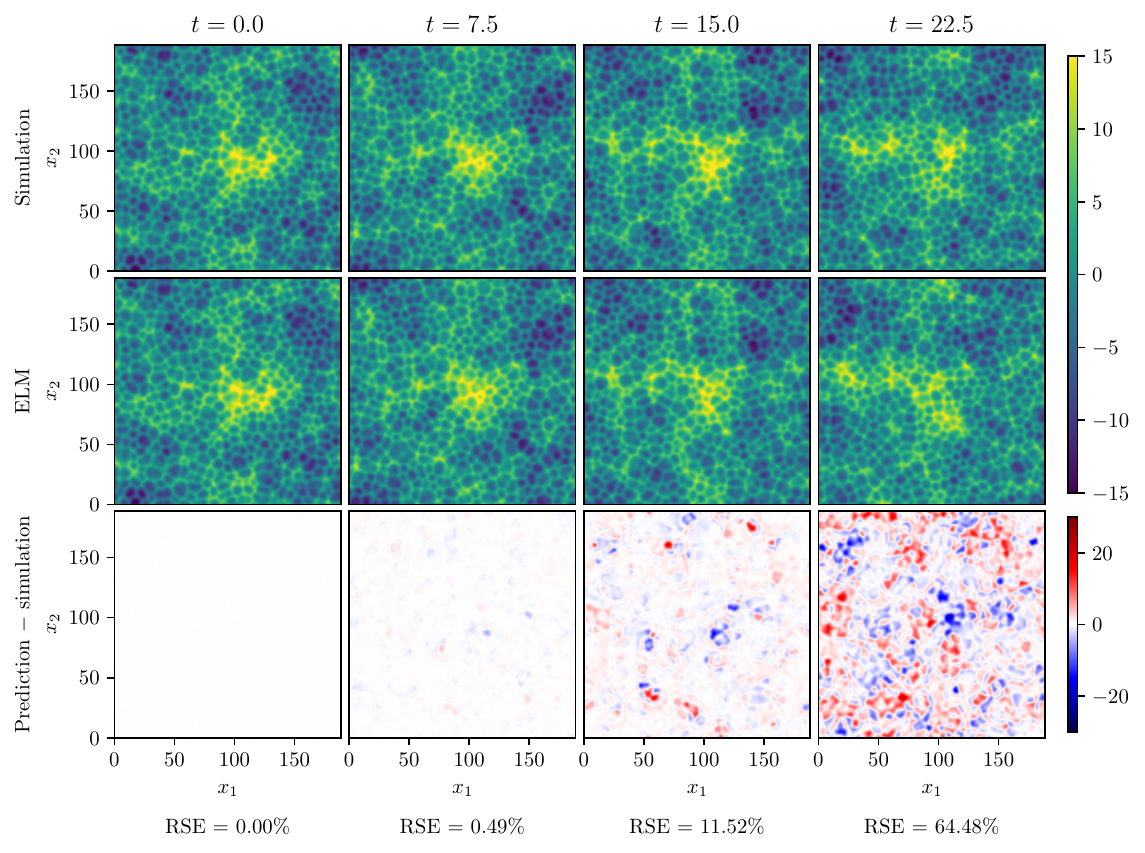}
        \includegraphics[width=0.25\textwidth]{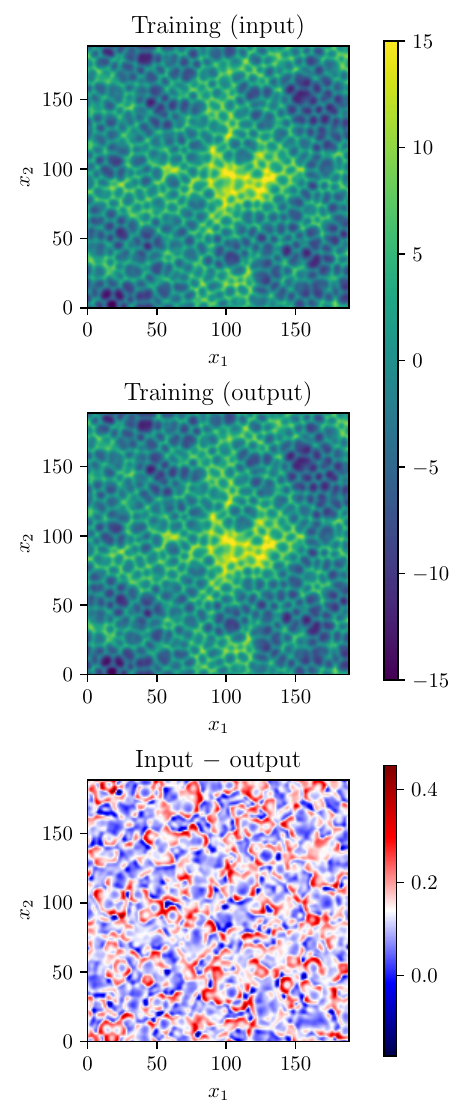}
    \end{center}
    \vspace{-2em}
    \caption{Simulation of the two-dimensional Kuramoto-Sivashinsky equation compared to the ELM's prediction (left), together with the training data (right).}
    \label{fig:ks2d}
\end{figure}

\subsection{Exploiting more symmetries and enforcing equivariance}
\label{sec:AdditionalSymmetries}

\begin{figure}[t]
    \centering
    \begin{subfigure}[t]{0.4\textwidth}
        \centering
        \begin{tikzpicture}
            \node[inner sep=0pt] at (0,1.5) 
                {\includegraphics[width=20pt]{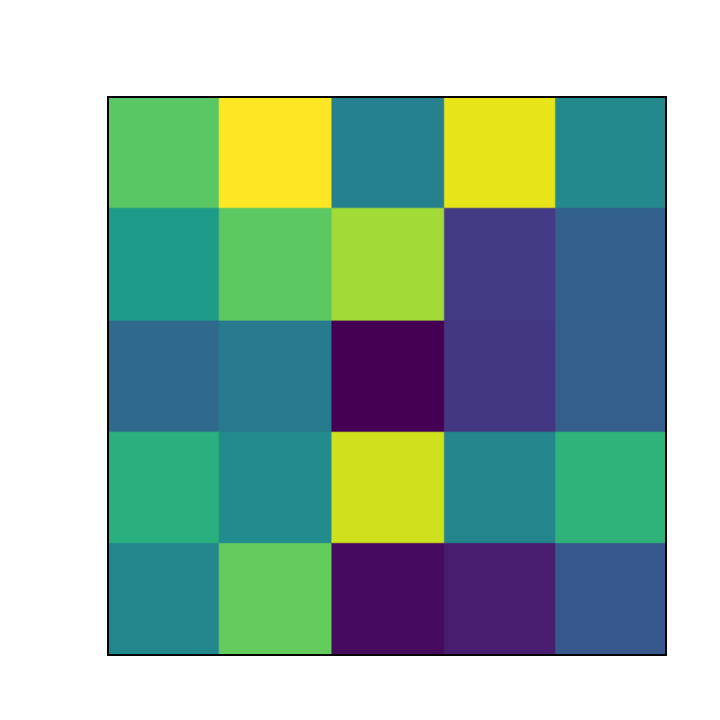}};
            
            \node[inner sep=0pt] at (2,1.5) 
                {\includegraphics[width=14pt]{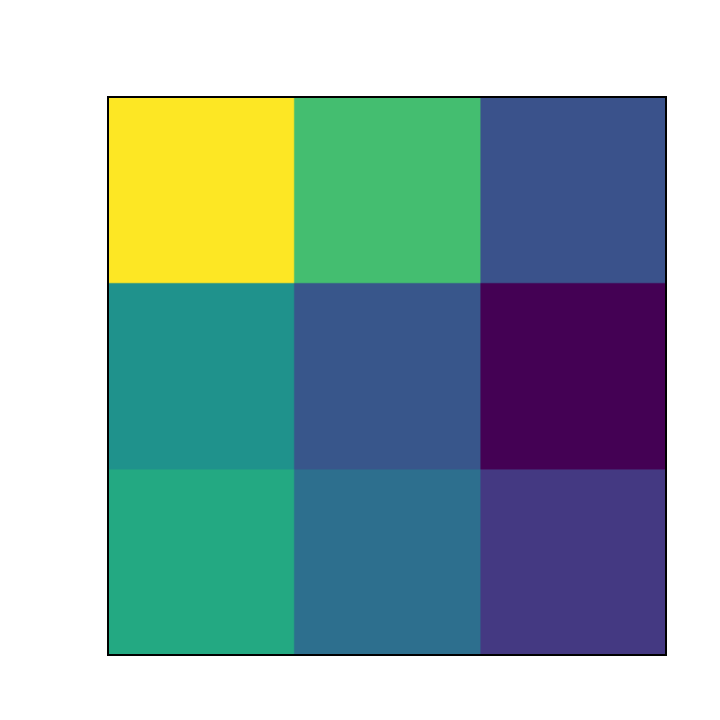}};
                
            \node[inner sep=0pt] at (0,0) 
                {\includegraphics[width=20pt]{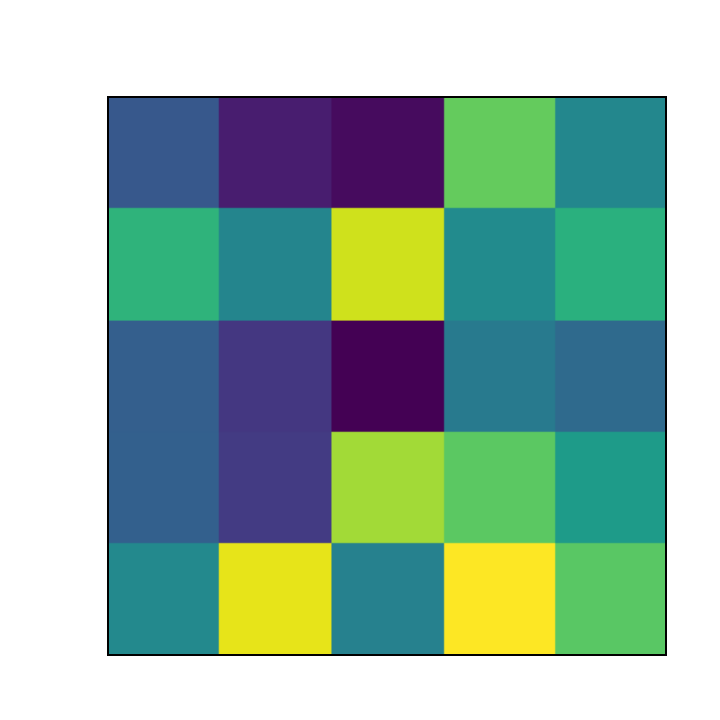}};
                
            \node[inner sep=0pt] at (2,0) 
                {\includegraphics[width=14pt]{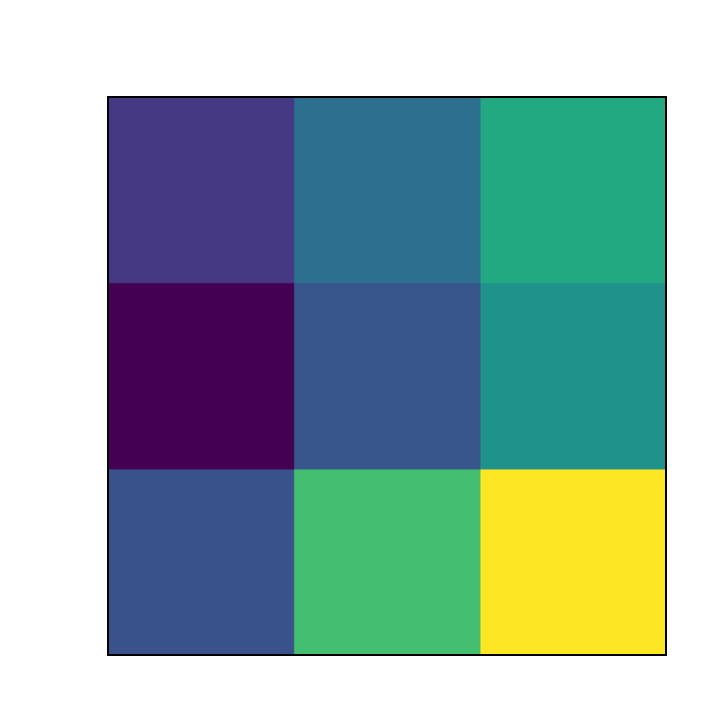}};
    
            \draw [<-] (-0.05,0.4) -- (-0.05,1.1) node[midway, left, shift=({-0.05,0.1})] {$R_i$};
            \draw [->] (0.05,0.4) -- (0.05,1.1) node[midway, right, shift=({+0.05,0.1})] {$R_i^{-1}$};
            
            \draw [<-] (1.95,0.4) -- (1.95,1.1) node[midway, left, shift=({-0.05,0.1})] {$R_i$};
            \draw [->] (2.05,0.4) -- (2.05,1.1) node[midway, right, shift=({+0.05,0.1})] {$R_i^{-1}$};
    
            \draw[->] (0.4, 1.5) -- (1.7, 1.5) node[above, midway] {$t+\Delta t$};
            \draw[->] (0.4, 0) -- (1.7, 0) node[below, midway] {$t+\Delta t$};

            \node at (1,-1) {$i=1,\dots,8$};
        \end{tikzpicture}
        \caption{Transforming an input window, evolving it forward and transforming it back yields the same result. Both top and bottom row contain valid input-output pairs.}
        \label{fig:window-equivariance}
    \end{subfigure}
    \hfill
    \begin{subfigure}[t]{0.5\textwidth}
        \scalebox{0.7}{
            \begin{tikzpicture}
                \draw [->] (0,5) -- (-2.75,4) node[midway, left, shift=({-0.3,0.1})] {$R_1$};
                \draw [->] (0,5) -- (-1.75,4);
                \draw [->] (0,5) -- (-1   ,4);
                \draw [->] (0,5) -- (-.3  ,4);
                \draw [->] (0,5) -- (+.3  ,4);
                \draw [->] (0,5) -- (+1   ,4);
                \draw [->] (0,5) -- (+1.75,4);
                \draw [->] (0,5) -- (+2.75,4) node[midway, right, shift=({+0.2,0.1})] {$R_8$};
                \node[inner sep=0pt] at (0,5) 
                    {\includegraphics[width=20pt]{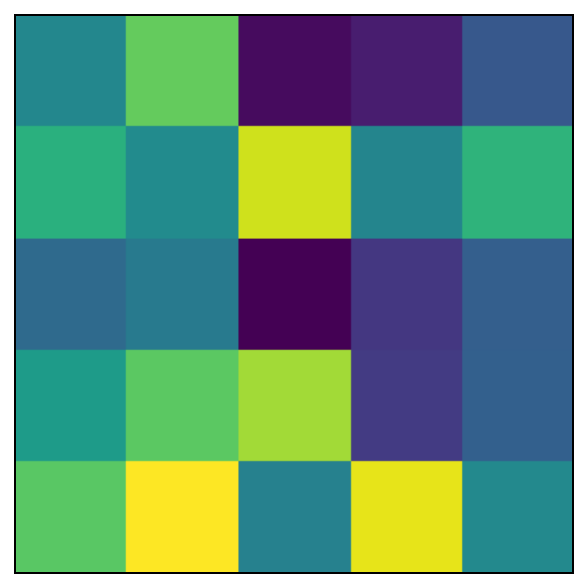}};

                \foreach \i in {0,...,7} {
                    \draw [->] (-100pt + 12.5pt + \i*200pt/8, 3.5) -- ++(0, -1) ;
                }
                \draw node at (-110pt, 2.75) {$E$};
                \draw node at (-75pt, 2.75) {\dots};
                \draw node at (+110pt, 2.75) {$E$};
                
                \foreach \i in {0,...,7} {
                    \draw [->] (-100pt + 12.5pt + \i*200pt/8, 2) -- ++(0, -1);
                }
                \draw node at (-110pt, 1.3) {$R_1^{-1}$};
                \draw node at (-75pt, 1.3) {\dots};
                \draw node at (+110pt, 1.3) {$R_8^{-1}$};
                
                \draw [<-] (-16pt,-0.9) -- (-2.75, 0);
                \draw [<-] (-12pt,-0.75) -- (-1.75, 0);
                \draw [<-] (-8pt ,-0.7) -- (-1   , 0);
                \draw [<-] (-4pt ,-0.65) -- (-.3  , 0);
                \draw [<-] ( 4pt ,-0.65) -- (+.3  , 0);
                \draw [<-] ( 8pt ,-0.7) -- (+1   , 0);
                \draw [<-] ( 12pt,-0.75) -- (+1.75, 0);
                \draw [<-] ( 16pt,-0.9) -- (+2.75, 0);

                \draw node at (+75pt, -0.6) {$\frac 1 8 \times \sum$};

                \node[inner sep=0pt] at (0,3.5) 
                    {\includegraphics[width=200pt]{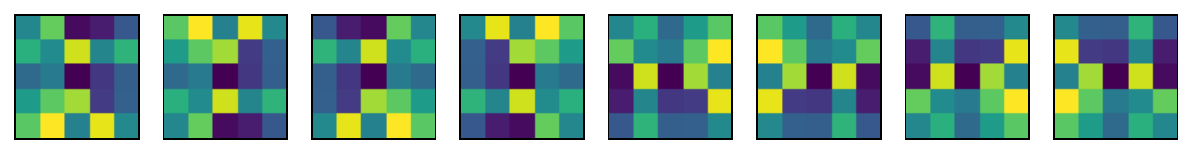}};

                \node[inner sep=0pt] at (0,2) 
                    {\includegraphics[width=192pt]{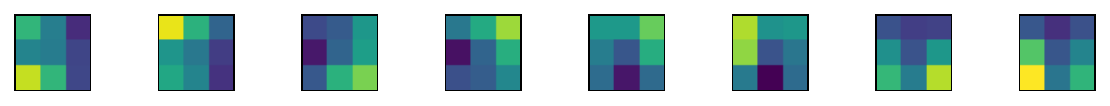}};
                
                \node[inner sep=0pt] at (0,.5) 
                    {\includegraphics[width=192pt]{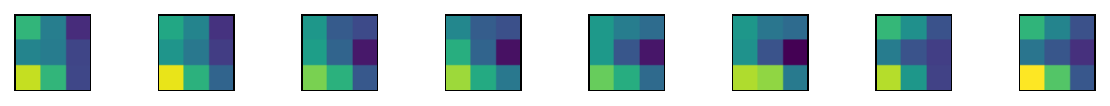}};
                
                \node[inner sep=0pt] at (0,-1) 
                    {\includegraphics[width=15pt]{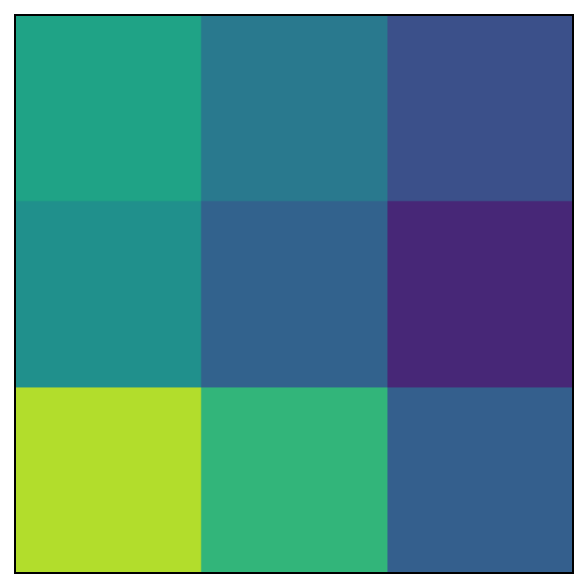}};
            \end{tikzpicture}
        }
        \caption{Equivariance can be enforced by transforming the input using each group element, applying the ELM, inverting each transformation and averaging the outputs.}
        \label{fig:averaging-equivariance}
    \end{subfigure}
    \caption{Schemes describing how the system's equivariance is transferred and can be used on a window level. (Stride $s = 3$ and extend $\ell = 1$.)}
\end{figure}

\begin{figure}[t]
    \begin{center}
        \includegraphics[width=0.75\textwidth]{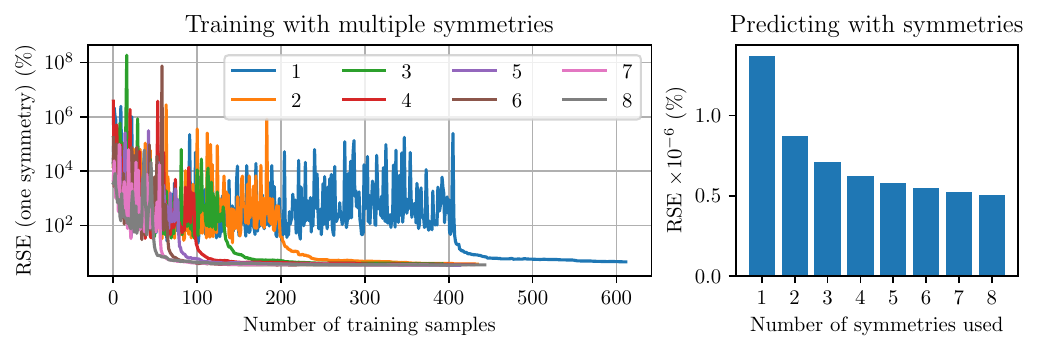}
    \end{center}
    \vspace{-1em}
    \caption{Symmetries can be used to increase data efficiency (left) and to improve the prediction quality (right). In the left figure, a sample is one input-output window pair.}
    \label{fig:exploiting-symmetries}
\end{figure}

The KS system \eqref{eq:KS_Nd} for two (and even arbitrary) spatial dimensions admits many symmetries that can be exploited. In particular, it is equivariant with respect to the \emph{Euclidean group} $E(2)$ that contains functions of the form $R : x \mapsto A x + b$, where $x, b \in \R^2$ and $A \in \R^{2 \times 2}$ is an orthogonal matrix, together with function composition as the group operation. Composition of $R$ with $v$, i.e., $(v, R) \mapsto v \circ R$, defines a right group action on $\Fc$.
To see that \emph{equivariance} of \eqref{eq:KS_Nd} holds, i.e., that
\begin{equation}
    \Phi(v \circ R) = \Phi(v) \circ R,
    \label{eq:invariance-solution}
\end{equation}
one can first argue that $\Laplace$, $\Laplace^2$ and $| \nabla (\cdot)|$ are equivariant (appendix, \Cref{thr:invariance}) so that, consequently, equivariance of the temporal derivative is established. %
This argument extends to the PDE's solution (appendix, \Cref{thr:invariance-solution}).

Since we are working with small windows on finite grids, we restrict ourselves to the symmetries of the square---a subgroup of $E(2)$---defined in our case by appropriate reflections or rotations of windows 
across their centers. Overall, there are eight group elements (identity, rotations of 90, 180 and 270 degrees, and finally reflections along the horizontal, vertical and two diagonal axes), that act on the windows (see \Cref{fig:window-equivariance}).
We exploit this in two ways: First, we add new artificial samples by letting some or all group elements act on known input-output pairs (see \Cref{fig:window-equivariance}).
This decreases the overall number of real samples needed. Second, we use multiple symmetries of one sample to generate a single output and therefore improve the prediction quality. That is, we predict an output-window by averaging over some or all group elements as shown in \Cref{fig:averaging-equivariance}. If we include all eight, this enforces full equivariance.
\Cref{fig:exploiting-symmetries} shows that both approaches work well. Adding artificial training samples significantly reduces the overall number of ``real'' samples needed, and using multiple symmetries for prediction improves the model's accuracy. Notice that both steps can be executed separately: It is possible to train a model on multiple symmetries but to make predictions using less, and vice versa.

\subsubsection*{Conclusion.}
An ELM-based approach appears to be highly promising for the task of modeling PDEs. This is due to several reasons. 
First, since the trained model is very small, it has only low data requirements. %
Second, it is not recurrent like previous approaches, a fact that significantly simplifies the setup including training process and prediction.
Third, %
it is possible to exploit additional system-inherent symmetries to decrease the number of necessary training samples and to improve the prediction quality.

\paragraph{Acknowledgement.}
HH and SP acknowledge financial support by the project ``SAIL: SustAInable Life-cycle of Intelligent Socio-Technical Systems'' (Grant ID NW21-059D), which is funded by the program ``Netzwerke 2021'' of the Ministry of Culture and Science of the State of Northrhine Westphalia, Germany.

\begin{appendix}

\section{Appendix}

Let $E(n)$ be the Euclidean group of dimension $n$, let $R \in E(2)$ and denote $R(x) = A x + d$, where $A \in \R^{n \times n}$ is orthogonal and $d \in \R^n$ arbitrary. For $v : \R^n \rightarrow \R$ and $\ifat \in J^m$, denote $\partial_{\ifat}^m v := \partial_{\ifat_1,\dots,\ifat_m}^m v$ and $|\nabla v|(x) := |\nabla v(x)|$.

\begin{lemma}
    If $v : \R^n \rightarrow \R$ is $m$-times differentiable and $\ifat \in J^m$, then
    \begin{equation}
        \partial^m_{\ifat} ( v \circ R )(x) 
        = \sum_{\kfat \in J^m} (\partial^m_{\kfat} v \circ R)(x) \prod_{s=1}^m A_{\kfat_s,\ifat_s}.
        \label{eq:thr-1}
    \end{equation}
\end{lemma}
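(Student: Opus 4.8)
The plan is to prove \eqref{eq:thr-1} by induction on the order $m$, using the chain rule for both the base case and the inductive step. Before starting, I would point out that orthogonality of $A$ plays no role in \emph{this} lemma: the identity holds for any affine map $R(x) = Ax + d$, and only the fact that the Jacobian of $R$ is the constant matrix $A$ is used. Orthogonality becomes relevant only afterwards, when the lemma is combined to establish equivariance of $\Laplace$, $\Laplace^2$ and $|\nabla(\cdot)|$.

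For the base case $m = 1$ I would apply the chain rule directly. Since $R_k(x) = \sum_j A_{k,j} x_j + d_k$, each partial $\partial R_k / \partial x_i$ equals the constant $A_{k,i}$, so
\[
  \partial_i (v \circ R)(x) = \sum_{k \in J} (\partial_k v \circ R)(x)\, A_{k,i},
\]
which is exactly \eqref{eq:thr-1} for $m = 1$.

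For the inductive step, suppose \eqref{eq:thr-1} holds at order $m$ and that $v$ is $(m+1)$-times differentiable. Given $\ifat \in J^{m+1}$, I would split it as $\ifat = (\ifat', i)$ with $\ifat' = (\ifat_1,\dots,\ifat_m)$ and $i = \ifat_{m+1}$, and write $\partial^{m+1}_{\ifat}(v \circ R) = \partial_i\, \partial^m_{\ifat'}(v \circ R)$. Applying the induction hypothesis to $\partial^m_{\ifat'}(v \circ R)$ and then differentiating once more in $x_i$, the key observation is that each coefficient $\prod_{s=1}^m A_{\kfat_s, \ifat_s}$ is a constant and therefore passes through $\partial_i$ untouched; the derivative acts only on the factor $\partial^m_{\kfat} v \circ R$, to which the already-proven $m=1$ case applies. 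This produces a new factor $A_{k,i}$ and promotes $\partial^m_{\kfat} v$ to $\partial^{m+1}_{(\kfat, k)} v$. Re-indexing the resulting double sum over $\kfat \in J^m$ and $k \in J$ as a single sum over the concatenated multi-index $(\kfat, k) \in J^{m+1}$, and collecting $A_{k,i} \prod_{s=1}^m A_{\kfat_s, \ifat_s} = \prod_{s=1}^{m+1} A_{\kfat_s, \ifat_s}$, yields \eqref{eq:thr-1} at order $m+1$.

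The computation itself is routine; the only real care needed is the multi-index bookkeeping---keeping track of which index of $A$ is summed and which is fixed, and checking that the product of matrix entries reassembles correctly after the re-indexing. A secondary point I would state cleanly is the differentiability hypothesis: applying $\partial_i$ in the inductive step requires $v \circ R$ to admit the relevant $(m+1)$-st mixed partials, which follows from $v$ being $(m+1)$-times differentiable together with $R$ being affine (hence smooth).
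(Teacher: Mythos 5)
Your proposal is correct and follows essentially the same route as the paper: induction on $m$, splitting $\ifat = (\ifat', \ifat_{m+1})$, applying the induction hypothesis, differentiating the constant-coefficient sum term by term via the $m=1$ case, and re-indexing over $J^{m+1}$. The only cosmetic difference is that you invoke the chain rule directly for the base case, whereas the paper re-derives it from the difference quotient and directional derivative; your added remark that orthogonality of $A$ is not needed here is accurate and consistent with how the paper defers orthogonality to the subsequent lemma.
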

\begin{proof}
    This is done by induction on $m$. For $m = 1$, notice that 
    \begin{align*}
        \partial_i (v \circ R)(x) 
        &= \textstyle \lim_{\delta \rightarrow^+ 0} \textstyle\frac 1 \delta (v(R(x+\delta {\bm 1}_i))-v(R(x)) ) \\
        &= \textstyle \lim_{\delta \rightarrow^+ 0} \textstyle\frac 1 \delta (v((Ax+d)+\delta A {\bm 1}_i)-v(Ax+d) ) \\
        &= \textstyle \nabla_{A{\bm 1}_i} v(Ax+d) 
        =  (A{\bm 1}_i)^T \nabla v(Ax+d)
        =  \sum_{j=1}^n A_{ji} (\partial_j v \circ R)(x),
    \end{align*}
    where $\bm{1}_i \in \R^n$ is the indicator vector for index $i$ and $\nabla_{(\cdot)}$ the directional derivative.
    For $m \geq 1$, we have inductively for $\ifat \in J^{m+1}$, $\ifat' = (\ifat_1,\dots,\ifat_m)$, $i = \ifat_{m+1}$:
    \begin{align*}
        \partial_{\ifat}^{m+1} (v \circ R)(x) 
        &= \textstyle \partial_{i} \partial^{m}_{\ifat'} (v\circ R)(x) \\
        &= \textstyle \partial_{i} \sum_{\kfat \in J^m}^n (\partial^m_{\kfat} v \circ R)(x) \prod_{s=1}^m A_{\kfat_s,\ifat'_s}  \\
        &= \textstyle \sum_{\kfat \in J^m} \big( \sum_{k=1}^n A_{k, i} (\partial^{m+1}_{\kfat,k} v\circ R)(x)\big)\prod_{s=1}^m A_{\kfat_s,\ifat'_s} \\
        &= \textstyle \sum_{\kfat \in J^{m+1}}  (\partial^{m+1}_{\kfat} v \circ R) (x) \prod_{s=1}^{m+1} A_{\kfat_s,\ifat_s}.
    \end{align*}
\end{proof}

\newpage

\begin{lemma}
    \label{thr:invariance}
    If $v : \R^n \rightarrow \R$ is four times differentiable and $A A^T = I$, then
    \begin{align*}
        | \nabla (v \circ R) | = | \nabla v \circ R |, \quad
        \Laplace (v \circ R) = \Laplace v \circ R, \quad
        \Laplace^2 (v \circ R) = \Laplace^2 v \circ R.
    \end{align*}
\end{lemma}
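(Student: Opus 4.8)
The plan is to reduce all three identities to the chain-rule formula \eqref{eq:thr-1} of the previous lemma, specialized to orders $m=1$ and $m=2$, and to extract everything from the single structural fact that $A$ is orthogonal. I will use orthogonality in the component form $\sum_{i} A_{ki}A_{li} = (AA^T)_{kl} = \delta_{kl}$, which is precisely what makes the mixed off-diagonal derivatives cancel.

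First, for the gradient norm I would apply \eqref{eq:thr-1} with $m=1$, giving $\partial_i(v\circ R)(x) = \sum_{j} A_{ji}(\partial_j v \circ R)(x)$; in vector form this reads $\nabla(v\circ R)(x) = A^T(\nabla v)(R(x))$. Writing $g := (\nabla v)(R(x))$ and squaring the Euclidean norm yields $|\nabla(v\circ R)(x)|^2 = g^T A A^T g = g^T g = |g|^2$ by $AA^T=I$. Taking square roots establishes $|\nabla(v\circ R)| = |\nabla v \circ R|$.

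Next, for the Laplacian $\Laplace = \sum_i \partial^2_{ii}$ I would apply \eqref{eq:thr-1} with $m=2$ and $\ifat=(i,i)$, then sum over $i$. Interchanging the order of summation, the inner sum $\sum_i A_{ki}A_{li}$ collapses to $\delta_{kl}$, so only the diagonal terms survive and $\Laplace(v\circ R)(x) = \sum_k (\partial^2_{kk} v \circ R)(x) = (\Laplace v \circ R)(x)$, which is the second claim.

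Finally, rather than grinding through \eqref{eq:thr-1} with $m=4$, I would obtain the biharmonic case by iterating the Laplacian identity: the previous step shows $\Laplace(w\circ R) = (\Laplace w)\circ R$ for every twice-differentiable $w$, and since $v$ is four-times differentiable I may take $w = \Laplace v$ and compute $\Laplace^2(v\circ R) = \Laplace\big(\Laplace(v\circ R)\big) = \Laplace\big((\Laplace v)\circ R\big) = (\Laplace^2 v)\circ R$. The only real subtlety in the whole argument is the bookkeeping in the Laplacian step, namely recognizing that orthogonality converts the coefficient product into a Kronecker delta so that the cross-derivative terms vanish; once this is in place, the fourth-order identity follows for free by composition, and the first two identities are immediate consequences of the $m=1$ formula.
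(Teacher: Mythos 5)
Your proof is correct, and the first two identities follow essentially the same route as the paper's: apply \eqref{eq:thr-1} with $m=1$ (your vector form $\nabla(v\circ R)(x) = A^T(\nabla v)(R(x))$ is just a tidier packaging of the same computation) and with $m=2$, then use $\sum_i A_{ki}A_{li}=\delta_{kl}$ to kill the off-diagonal terms. Where you genuinely diverge is the biharmonic case: the paper applies \eqref{eq:thr-1} directly at order $m=4$ and collapses the fourfold sum with two Kronecker deltas, whereas you iterate the already-established Laplacian identity on $w=\Laplace v$. Your route is shorter and avoids the $m=4$ index bookkeeping entirely, at the cost of the (correct) observation that the second-order identity holds for any twice-differentiable function, so it may be applied to $\Laplace v$. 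Both arguments share the same implicit assumption that mixed partials commute when identifying $\Laplace^2$ with $\sum_{i,j}\partial^4_{i,i,j,j}$ (respectively with $\Laplace\circ\Laplace$), so neither is more nor less rigorous on that point; yours is simply the more economical derivation.
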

\begin{proof}
    Notice that $A A^T = I$ implies $\sum_{i=1}^n  A_{k,i} A_{j,i} = 1$ if $k = j$ and $0$ otherwise. Denote this fact by $(*)$.
    Then:
    \begin{align*}
        |\nabla (v\circ R)(x) |^2 
        &= \textstyle \sum_{i=1}^n (\partial_i (v\circ R)(x))^2 \\
        &= \textstyle \sum_{i=1}^n (\sum_{j=1}^n A_{ji} (\partial_j v \circ R)(x))^2    
            & \eqref{eq:thr-1} \\
        &= \textstyle \sum_{i=1}^n \sum_{j_1,j_2=1}^n A_{j_1, i} A_{j_2,i} (\partial_{j_1} v\circ R)(x)(\partial_{j_2} v\circ R)(x) \\
        &= \textstyle \sum_{j_1,j_2=1}^n (\partial_{j_1} v \circ R)(x)(\partial_{j_2} v \circ R)(x) \sum_{i=1}^n A_{j_1, i} A_{j_2,i} \\
        &= \textstyle \sum_{j=1}^n  (\partial_{j} v \circ R)(x)^2 = | (\nabla v \circ R)(x)|^2. & (*) \\
        \Laplace (v \circ R)(x) 
        &= \textstyle \sum_{i=1}^n \partial_{i,i}^2 (v\circ R)(x) \\
        &= \textstyle \sum_{i=1}^n \sum_{j_1,j_2=1}^n A_{j_1,i} A_{j_2,i} (\partial^2_{j_1,j_2} v \circ R)(x) 
            & \eqref{eq:thr-1} \\
        &= \textstyle \sum_{j_1,j_2=1}^n (\partial^2_{j_1,j_2} v \circ R)(x) \sum_{i=1}^n A_{j_1,i} A_{j_2,i} \\
        &= \textstyle \sum_{j=1}^n (\partial^2_{j,j} v \circ R)(x) = (\Laplace v \circ R)(x). 
            & (*) \\
        \Laplace^2 (v \circ R)(x) 
        &= \textstyle \sum_{i=1}^n \sum_{j=1}^n \partial_{i,i,j,j}^4 (v \circ R)(x) \\
        &= \textstyle \sum_{i=1}^n \sum_{j=1}^n \sum_{\kfat \in J^4} A_{\kfat_1,i} A_{\kfat_2,i} A_{\kfat_3,j} A_{\kfat_4,j} 
           (\partial^4_{\kfat} v \circ R)(x) 
            & \eqref{eq:thr-1} \\
        &= \textstyle \sum_{\kfat \in J^4} (\partial^4_{\kfat} v \circ R)(x) 
           \sum_{i=1}^n A_{\kfat_1,i} A_{\kfat_2,i}  \sum_{j=1}^n A_{\kfat_3,j} A_{\kfat_4,j} \\
        &= \textstyle \sum_{k_1=1}^n \sum_{k_3=1}^n (\partial^4_{k_1,k_1,k_3,k_3} v \circ R)(x) = (\Laplace^2 v \circ R)(x).
            & (*)
    \end{align*}
\end{proof}

\begin{lemma}
    \label{thr:invariance-solution}
    Define the explicit Euler method for $v \in \Fc$, $t \geq 0$, $\delta > 0$ by:
    \begin{align*}
        \Phi^{\delta}_t(v) := \begin{cases}
            v + t f(v) & t \leq \delta  \\
            \Phi^{\delta}_{t-\delta}(v) + \delta f( \Phi^\delta_{t-\delta}(v) ) & t > \delta,
        \end{cases}
    \end{align*}
    where $f : \Fc \rightarrow \Fc$ satisfies $f(v \circ R) = f(v) \circ R$ for all $v \in \Fc$, and assume that $\lim_{\delta \rightarrow^+ 0} \Phi^\delta_t(v) = \Phi_t(v)$ pointwise for $v \in \Fc$ and $t \geq 0$.
    Then 
    \begin{equation}
        \Phi_t(v \circ R) = \Phi_t(v) \circ R.
    \end{equation}
\end{lemma}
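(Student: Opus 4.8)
The plan is to establish equivariance first for the explicit Euler approximation at each fixed step size $\delta > 0$, and then to transfer it to $\Phi_t$ by passing to the limit $\delta \to^+ 0$. The one structural fact I would record at the outset is that the right action $v \mapsto v \circ R$ is \emph{linear} on $\Fc$: since $(v \circ R)(x) = v(R(x))$ is just evaluation at the relabelled point $R(x)$, we have $(\alpha v + \beta w) \circ R = \alpha\,(v \circ R) + \beta\,(w \circ R)$ for all scalars $\alpha,\beta$ and all $v,w \in \Fc$. This linearity, together with the standing hypothesis $f(v \circ R) = f(v) \circ R$, is the only algebra the argument needs.

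First I would show, for each fixed $\delta > 0$, that $\Phi^{\delta}_t(v \circ R) = \Phi^{\delta}_t(v) \circ R$ for every $t \geq 0$, by induction on the (finite) number of Euler steps $n = \lceil t/\delta \rceil$. In the base case $t \leq \delta$ I compute directly
\begin{align*}
    \Phi^{\delta}_t(v \circ R) = (v \circ R) + t\, f(v \circ R) = (v \circ R) + t\,(f(v) \circ R) = \big(v + t\, f(v)\big) \circ R = \Phi^{\delta}_t(v) \circ R,
\end{align*}
invoking equivariance of $f$ and then linearity of the action. For the inductive step $t > \delta$ I unfold the recursive definition, insert the inductive hypothesis for $\Phi^{\delta}_{t-\delta}$, and again use equivariance of $f$ followed by linearity to factor $\circ R$ out of the whole expression; concretely
\begin{align*}
    \Phi^{\delta}_t(v \circ R)
    &= \Phi^{\delta}_{t-\delta}(v \circ R) + \delta\, f\big(\Phi^{\delta}_{t-\delta}(v \circ R)\big) \\
    &= \Phi^{\delta}_{t-\delta}(v) \circ R + \delta\, \big(f(\Phi^{\delta}_{t-\delta}(v)) \circ R\big)
    = \Big(\Phi^{\delta}_{t-\delta}(v) + \delta\, f(\Phi^{\delta}_{t-\delta}(v))\Big) \circ R
    = \Phi^{\delta}_t(v) \circ R.
\end{align*}

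Then I would pass to the limit. By assumption $\Phi^{\delta}_t(w) \to \Phi_t(w)$ pointwise as $\delta \to^+ 0$ for each $w \in \Fc$; applying this to $w = v \circ R$ and evaluating at an arbitrary $x$ gives
\begin{align*}
    \Phi_t(v \circ R)(x) = \lim_{\delta \to^+ 0} \Phi^{\delta}_t(v \circ R)(x) = \lim_{\delta \to^+ 0} \big(\Phi^{\delta}_t(v) \circ R\big)(x) = \lim_{\delta \to^+ 0} \Phi^{\delta}_t(v)(R(x)) = \Phi_t(v)(R(x)),
\end{align*}
where the last equality is the assumed pointwise convergence of $\Phi^{\delta}_t(v)$ at the fixed point $R(x)$. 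Since $\Phi_t(v)(R(x)) = (\Phi_t(v) \circ R)(x)$ and $x$ was arbitrary, this yields $\Phi_t(v \circ R) = \Phi_t(v) \circ R$, as claimed.

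The main obstacle — indeed the only step requiring care — is this final interchange of the limit with the composition $\circ R$. Because convergence is only assumed pointwise rather than in a function-space norm, I cannot treat $\circ R$ as a generic continuous operator; the interchange is justified instead by the observation that $\circ R$ merely reindexes the evaluation point, so that pointwise convergence at every point is automatically preserved. Everything else reduces to the routine induction on the Euler scheme above.
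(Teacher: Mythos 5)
Your proposal is correct and follows essentially the same route as the paper's proof: an induction on the number of Euler steps $\lceil t/\delta\rceil$ using equivariance of $f$ and linearity of $(\cdot)\circ R$, followed by passing to the limit $\delta\to^+ 0$, where composing with $R$ commutes with pointwise limits because it only reindexes the evaluation point. Your explicit justification of that final interchange is slightly more careful than the paper's, but the argument is the same.
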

\begin{proof}
    We show $\Phi^\delta_t(v \circ R) = \Phi^\delta_t(v) \circ R$ by induction on $M = \lceil t / \delta \rceil$ . For $M \leq 1$, we have $\delta \geq t$. Using invariance of $f$ and linearity of the composition operator,
    \begin{align*}
        \Phi^\delta_t(v \circ R) = (v \circ R) + t f(v\circ R) 
        = (v + t f(v)) \circ R 
        = \Phi^\delta_t(v) \circ R.
    \end{align*}
    For the induction hypothesis, fix an $M \geq 1$ such that $\Phi^\delta_t(v \circ R) = \Phi^\delta_t(v) \circ R$ for all $\delta > 0, t \geq 0$ with $\lceil t / \delta \rceil \leq M$. 
    For the induction step, let $\lceil t / \delta \rceil = M + 1$. Then
    $$
        2 \leq M+1 = \lceil t / \delta \rceil < t/\delta + 1,
    $$
    and therefore $\delta < t$. Thus,
    \begin{align*}
        \Phi^\delta_t(v \circ R) 
        &= \Phi^\delta_{t-\delta}(v \circ R) + \delta f(\Phi^\delta_{t-\delta}(v \circ R)).
    \end{align*}
    Since 
    $$
        \lceil (t-\delta) / \delta \rceil = \lceil t/\delta - 1 \rceil = \lceil t / \delta \rceil - 1 = M, 
    $$
    we can use the induction hypothesis:
    $%
        \Phi^\delta_{t-\delta}(v \circ R) = \Phi^\delta_{t-\delta}(v)  \circ R.
    $ %
    Using invariance of $f$ and linearity of the composition operator, we obtain:
    \begin{align*}
        \Phi^\delta_t(v \circ R) 
        = (\Phi^\delta_{t-\delta}(v) + \delta f(\Phi^\delta_{t-\delta}(v))) \circ R 
        = \Phi^\delta_t(v) \circ R.
    \end{align*}
    This concludes the induction.
    Taking $\delta$ to zero, we finally have
    \begin{align*}
        \Phi_t(v\circ R) 
        &= \textstyle \lim_{\delta \rightarrow^+ 0} \Phi^\delta_t(v \circ R)  \\
        &= \textstyle \lim_{\delta \rightarrow^+ 0} (\Phi^\delta_t(v) \circ R) \\
        &= \textstyle (\lim_{\delta \rightarrow^+ 0} \Phi^\delta_t(v)) \circ R
        = \textstyle \Phi_t(v) \circ R.
    \end{align*}
\end{proof}

\end{appendix}

\bibliographystyle{plain} %
\bibliography{main} %

\begin{thebibliography}{10}

\bibitem{BBCV21}
M.~M. Bronstein, J.~Bruna, T.~S. Cohen, and {\relax Veli{\v c}kovi{\'c}}.~P.
\newblock Geometric deep learning: {{Grids}}, groups, graphs, geodesics, and
  gauges.
\newblock {\em arXiv:2104.13478}, 2021.

\bibitem{bruntonDiscoveringGoverningEquations2016}
Steven~L. Brunton, Joshua~L. Proctor, and J.~Nathan Kutz.
\newblock Discovering governing equations from data by sparse identification of
  nonlinear dynamical systems.
\newblock {\em Proceedings of the National Academy of Sciences},
  113(15):3932--3937, April 2016.

\bibitem{TW16}
T.~S. Cohen and M.~Welling.
\newblock Group equivariant convolutional networks.
\newblock In {\em International Conference on International Conference on
  Machine Learning}, volume~33, pages 2990--2999, 2016.

\bibitem{dingExtremeLearningMachine2014}
Shifei Ding, Xinzheng Xu, and Ru~Nie.
\newblock Extreme learning machine and its applications.
\newblock {\em Neural Computing and Applications}, 25(3-4):549--556, September
  2014.

\bibitem{DS20}
V.~Dwivedi and B.~Srinivasan.
\newblock {Physics Informed Extreme Learning Machine (PIELM)–A rapid method
  for the numerical solution of partial differential equations}.
\newblock {\em Neurocomputing}, 391:96–118, 2020.

\bibitem{eivazi2022physics}
Hamidreza Eivazi, Mojtaba Tahani, Philipp Schlatter, and Ricardo Vinuesa.
\newblock Physics-informed neural networks for solving reynolds-averaged
  navier--stokes equations.
\newblock {\em Physics of Fluids}, 34(7), 2022.

\bibitem{EAMD18}
C.~Esteves, C.~Allen-Blanchette, A.~Makadia, and K.~Daniilidis.
\newblock {Learning SO(3) Equivariant Representations with Spherical CNNs}.
\newblock In {\em Proceedings of the European Conference on Computer Vision
  (ECCV)}, 2018.

\bibitem{GZ20}
N.~Geneva and N.~Zabaras.
\newblock Modeling the dynamics of {{PDE}} systems with physics-constrained
  deep auto-regressive networks.
\newblock {\em Journal of Computational Physics}, 403:109056, 2020.

\bibitem{GRS+23}
L.~Guastoni, J.~Rabault, P.~Schlatter, H.~Azizpour, and R.~Vinuesa.
\newblock Deep reinforcement learning for turbulent drag reduction in channel
  flows.
\newblock {\em The European Physical Journal E}, 46(4), 2023.

\bibitem{HJE18}
J.~Han, A.~Jentzen, and W.~E.
\newblock Solving high-dimensional partial differential equations using deep
  learning.
\newblock {\em Proceedings of the National Academy of Sciences},
  115(34):8505–8510, 2018.

\bibitem{huang2006extreme}
Guang-Bin Huang, Qin-Yu Zhu, and Chee-Kheong Siew.
\newblock Extreme learning machine: Theory and applications.
\newblock {\em Neurocomputing}, 70(1-3):489--501, 2006.

\bibitem{jeon2024advanced}
Joogoo Jeon, Jean Rabault, Joel Vasanth, Francisco Alc{\'a}ntara-{\'A}vila,
  Shilaj Baral, and Ricardo Vinuesa.
\newblock Advanced deep-reinforcement-learning methods for flow control:
  group-invariant and positional-encoding networks improve learning speed and
  quality.
\newblock {\em arXiv preprint arXiv:2407.17822}, 2024.

\bibitem{kalogirou2015depth}
Anna Kalogirou, Eric~E Keaveny, and Demetrios~T Papageorgiou.
\newblock An in-depth numerical study of the two-dimensional
  kuramoto--sivashinsky equation.
\newblock {\em Proceedings of the Royal Society A: Mathematical, Physical and
  Engineering Sciences}, 471(2179):20140932, 2015.

\bibitem{KKLP+21}
G.~E. Karniadakis, I.~G. Kevrekidis, L.~Lu, P.~Perdikaris, S.~Wang, and
  L.~Yang.
\newblock Physics-informed machine learning.
\newblock {\em Nature Reviews Physics}, 3(6):422–440, 2021.

\bibitem{knigge2022exploiting}
David~M Knigge, David~W Romero, and Erik~J Bekkers.
\newblock Exploiting redundancy: Separable group convolutional networks on lie
  groups.
\newblock In {\em International Conference on Machine Learning}, pages
  11359--11386. PMLR, 2022.

\bibitem{KV99}
K.~Kunisch and S.~Volkwein.
\newblock Control of the burgers equation by a reduced-order approach using
  proper orthogonal decomposition.
\newblock {\em Journal of Optimization Theory and Applications},
  102(2):345--371, 1999.

\bibitem{lafarge2021roto}
Maxime~W Lafarge, Erik~J Bekkers, Josien~PW Pluim, Remco Duits, and Mitko Veta.
\newblock Roto-translation equivariant convolutional networks: Application to
  histopathology image analysis.
\newblock {\em Medical Image Analysis}, 68:101849, 2021.

\bibitem{levequeFiniteDifferenceMethods2007}
Randall~J. LeVeque.
\newblock {\em Finite {{Difference Methods}} for {{Ordinary}} and {{Partial
  Differential Equations}}: {{Steady-State}} and {{Time-Dependent Problems}}}.
\newblock {Society for Industrial and Applied Mathematics}, January 2007.

\bibitem{LKA+20}
Z.~Li, N.~Kovachki, K.~Azizzadenesheli, B.~Liu, K.~Bhattacharya, A.~Stuart, and
  A.~Anandkumar.
\newblock Fourier neural operator for parametric partial differential
  equations.
\newblock {\em arXiv:2010.08895}, 2020.

\bibitem{long2018pde}
Zichao Long, Yiping Lu, Xianzhong Ma, and Bin Dong.
\newblock Pde-net: {{Learning}} pdes from data.
\newblock In {\em International Conference on Machine Learning}, pages
  3208--3216. PMLR, 2018.

\bibitem{LJP+21}
L.~Lu, P.~Jin, G.~Pang, Z.~Zhang, and G.~E. Karniadakis.
\newblock Learning nonlinear operators via {{DeepONet}} based on the universal
  approximation theorem of operators.
\newblock {\em Nature Machine Intelligence}, 3(3):218--229, 2021.

\bibitem{Mau21}
A.~Mauroy.
\newblock Koopman operator framework for spectral analysis and identification
  of infinite-dimensional systems.
\newblock {\em Mathematics}, 9(2495), 2021.

\bibitem{mortensenShenfunHighPerformance2018}
Mikael Mortensen.
\newblock Shenfun: {{High}} performance spectral {{Galerkin}} computing
  platform.
\newblock {\em Journal of Open Source Software}, 3(31):1071, November 2018.

\bibitem{NM20}
H.~Nakao and I.~Mezi{\'c}.
\newblock Spectral analysis of the {{Koopman}} operator for partial
  differential equations.
\newblock {\em Chaos: An Interdisciplinary Journal of Nonlinear Science},
  30(11), 2020.

\bibitem{PTMS22}
S.~Pandey, P.~Teutsch, P.~M{\"a}der, and J.~Schumacher.
\newblock Direct data-driven forecast of local turbulent heat flux in
  {{Rayleigh}}--{{B{\'e}nard}} convection.
\newblock {\em Physics of Fluids}, 34(4):045106, 2022.

\bibitem{pandeyReservoirComputingModel2020}
Sandeep Pandey and J{\"o}rg Schumacher.
\newblock Reservoir computing model of two-dimensional turbulent convection.
\newblock {\em Physical Review Fluids}, 5(11):113506, November 2020.

\bibitem{pathakModelFreePredictionLarge2018}
Jaideep Pathak, Brian Hunt, Michelle Girvan, Zhixin Lu, and Edward Ott.
\newblock Model-{{Free Prediction}} of {{Large Spatiotemporally Chaotic
  Systems}} from {{Data}}: {{A Reservoir Computing Approach}}.
\newblock {\em Physical Review Letters}, 120(2):024102, January 2018.

\bibitem{PHN+23}
S.~Peitz, H.~Harder, F.~N{\"u}ske, F.~Philipp, M.~Schaller, and K.~Worthmann.
\newblock Partial observations, coarse graining and equivariance in {{Koopman}}
  operator theory for large-scale dynamical systems.
\newblock {\em arXiv:2307.15325}, 2023.

\bibitem{POD19}
S.~Peitz, S.~{Ober-Bl{\"o}baum}, and M.~Dellnitz.
\newblock Multiobjective optimal control methods for the navier-stokes
  equations using reduced order modeling.
\newblock {\em Acta Applicandae Mathematicae}, 161(1):171--199, 2019.

\bibitem{PSC+24}
S.~Peitz, J.~Stenner, V.~Chidananda, O.~Wallscheid, S.~L. Brunton, and
  K.~Taira.
\newblock Distributed control of partial differential equations using
  convolutional reinforcement learning.
\newblock {\em Physica D: Nonlinear Phenomena}, 461:134096, 2024.

\bibitem{rafayelyanLargeScaleOpticalReservoir2020}
Mushegh Rafayelyan, Jonathan Dong, Yongqi Tan, Florent Krzakala, and Sylvain
  Gigan.
\newblock Large-{{Scale Optical Reservoir Computing}} for {{Spatiotemporal
  Chaotic Systems Prediction}}.
\newblock {\em Physical Review X}, 10(4):041037, November 2020.

\bibitem{RRL+22}
Pu~Ren, Chengping Rao, Yang Liu, Jian-Xun Wang, and Hao Sun.
\newblock {{PhyCRNet}}: {{Physics-informed}} convolutional-recurrent network
  for solving spatiotemporal {{PDEs}}.
\newblock {\em Computer Methods in Applied Mechanics and Engineering},
  389:114399, 2022.

\bibitem{RMB+09}
C.~W. Rowley, I.~Mezi{\'c}, S.~Bagheri, P.~Schlatter, and D.~S. Henningson.
\newblock Spectral analysis of nonlinear flows.
\newblock {\em Journal of Fluid Mechanics}, 641:115--127, 2009.

\bibitem{Sch10}
P.~J. Schmid.
\newblock Dynamic mode decomposition of numerical and experimental data.
\newblock {\em Journal of Fluid Mechanics}, 656:5--28, 2010.

\bibitem{Sir87}
L.~Sirovich.
\newblock Turbulence and the dynamics of coherent structures {{Part I}}:
  Coherent structures.
\newblock {\em Quarterly of Applied Mathematics}, XLV(3):561--571, 1987.

\bibitem{srinivasan2019predictions}
P.~A. Srinivasan, L.~Guastoni, H.~Azizpour, P.~Schlatter, and R.~Vinuesa.
\newblock Predictions of turbulent shear flows using deep neural networks.
\newblock {\em Physical Review Fluids}, 4(5):054603, 2019.

\bibitem{strikwerdaFiniteDifferenceSchemes2004}
John~C. Strikwerda.
\newblock {\em Finite {{Difference Schemes}} and {{Partial Differential
  Equations}}, {{Second Edition}}}.
\newblock {Society for Industrial and Applied Mathematics}, January 2004.

\bibitem{VRV+23}
C.~Vignon, J.~Rabault, J.~Vasanth, F.~{Alc{\'a}ntara-{\'A}vila}, M.~Mortensen,
  and R.~Vinuesa.
\newblock Effective control of two-dimensional {{Rayleigh}}--{{B{\'e}nard}}
  convection: {{Invariant}} multi-agent reinforcement learning is all you need.
\newblock {\em Physics of Fluids}, 35(6):065146, 2023.

\bibitem{vlachasBackpropagationAlgorithmsReservoir2020}
P.R. Vlachas, J.~Pathak, B.R. Hunt, T.P. Sapsis, M.~Girvan, E.~Ott, and
  P.~Koumoutsakos.
\newblock Backpropagation algorithms and {{Reservoir Computing}} in {{Recurrent
  Neural Networks}} for the forecasting of complex spatiotemporal dynamics.
\newblock {\em Neural Networks}, 126:191--217, June 2020.

\bibitem{wangReviewExtremeLearning2022}
Jian Wang, Siyuan Lu, Shui-Hua Wang, and Yu-Dong Zhang.
\newblock A review on extreme learning machine.
\newblock {\em Multimedia Tools and Applications}, 81(29):41611--41660,
  December 2022.

\bibitem{WP24}
S.~Werner and S.~Peitz.
\newblock Numerical evidence for sample efficiency of model-based over
  model-free reinforcement learning control of partial differential equations.
\newblock In {\em {European Control Conference (ECC)}}, pages 2958--2964. IEEE,
  2024.

\bibitem{williams2015data}
Matthew~O Williams, Ioannis~G Kevrekidis, and Clarence~W Rowley.
\newblock A data--driven approximation of the koopman operator: Extending
  dynamic mode decomposition.
\newblock {\em Journal of Nonlinear Science}, 25:1307--1346, 2015.

\bibitem{WRL19}
N.~Winovich, K.~Ramani, and G.~Lin.
\newblock {{ConvPDE-UQ}}: {{Convolutional}} neural networks with quantified
  uncertainty for heterogeneous elliptic partial differential equations on
  varied domains.
\newblock {\em Journal of Computational Physics}, 394:263--279, 2019.

\bibitem{ZG21}
K.~Zeng and M.~D. Graham.
\newblock Symmetry reduction for deep reinforcement learning active control of
  chaotic spatiotemporal dynamics.
\newblock {\em Physical Review E: Statistical Physics, Plasmas, Fluids, and
  Related Interdisciplinary Topics}, 104(1):04210, 2021.

\bibitem{ZLG22}
K.~Zeng, A.~J. Linot, and M.~D. Graham.
\newblock Data-driven control of spatiotemporal chaos with reduced-order neural
  {{ODE-based}} models and reinforcement learning.
\newblock {\em Proceedings of the Royal Society A: Mathematical, Physical and
  Engineering Sciences}, 478(2267):20220297, November 2022.

\end{thebibliography}

\end{document}